\DeclareMathOperator*{\argmin}{arg\,min}
\theoremstyle{plain}
\newtheorem{theorem}{Theorem}[section]
\newtheorem{lemma}[theorem]{Lemma}
\newtheorem{proposition}[theorem]{Proposition}
\theoremstyle{definition}
\theoremstyle{remark}
\title{Mental Accounts for Actions: EWA-Inspired Attention in Decision Transformers}
\author{%
  Zahra Aref \\
  Department of Electrical and Computer Engineering \\
  Rutgers University\\
  New Brunswick, NJ 08901 \\
  \texttt{zahra.aref@rutgers.edu} \\
  \And
  Narayan B. Mandayam \\
  WINLAB, Department of ECE \\
Rutgers University \\
  New Brunswick, NJ 08901 \\
  \texttt{narayan@winlab.rutgers.edu} \\
}
\begin{document}

\maketitle

\begin{abstract}
Transformers have emerged as a compelling architecture for sequential decision-making by modeling trajectories via self-attention. In reinforcement learning (RL), they enable return-conditioned control without relying on value function approximation. Decision Transformers (DTs) exploit this by casting RL as supervised sequence modeling, but they are restricted to offline data and lack exploration. Online Decision Transformers (ODTs) address this limitation through entropy-regularized training on on-policy rollouts, offering a stable alternative to traditional RL methods like Soft Actor-Critic, which depend on bootstrapped targets and reward shaping.
Despite these advantages, ODTs use standard attention, which lacks explicit memory of action-specific outcomes. This leads to inefficiencies in learning long-term action effectiveness. Inspired by cognitive models such as Experience–Weighted Attraction (EWA), we propose Experience-Weighted Attraction with Vector Quantization for Online Decision
Transformers (EWA–VQ–ODT), a lightweight module that maintains per-action “mental accounts” summarizing recent successes and failures. Continuous actions are routed via direct grid lookup to a compact vector-quantized codebook, where each code stores a scalar attraction updated online through decay and reward-based reinforcement. These attractions modulate attention by biasing the columns associated with action tokens, requiring no change to the backbone or training objective.
On standard continuous-control benchmarks, EWA–VQ–ODT improves sample efficiency and average return over ODT, particularly in early training. The module is computationally efficient, interpretable via per-code traces, and supported by theoretical guarantees that bound the attraction dynamics and its impact on attention drift.
\end{abstract}


\section{Introduction}
\label{sec:intro}
Transformers, originally developed for natural language processing, have emerged as a powerful backbone for sequential decision-making. Their ability to model long-range dependencies through self-attention makes them an appealing choice for reinforcement learning (RL), particularly when actions, states, and rewards can be treated as tokens in a trajectory sequence \citep{chen2021decision,janner2021offline,zheng2022online}.

While traditional RL methods such as Soft Actor-Critic (SAC) \citep{haarnoja2018soft} rely on value estimation and temporal-difference (TD) bootstrapping, Decision Transformers (DTs) bypass value functions altogether by predicting actions through return-conditioned sequence modeling. However, DTs are trained offline and lack mechanisms for online adaptation or exploration, which limits their performance in dynamic or partially observed environments.

Online Decision Transformers (ODT) address this limitation by adapting policies online while maintaining the supervised learning objective of DTs. ODTs blend imitation and exploration by maximizing the log-likelihood of actions while conditioning on returns. This formulation avoids common RL challenges like unstable TD targets and handcrafted reward shaping. Yet, ODTs still operate with standard attention mechanisms, which lack explicit memory for tracking the long-term effectiveness of past actions.

Inspired by cognitive theories such as Experience–Weighted Attraction (EWA) \citep{camerer2003behavioral}, we posit that agents should maintain per-action “mental accounts” — cumulative records of action success — that decay over time and guide attention. Unlike standard transformers, humans exhibit persistence and bias in decision-making, often favoring actions that have historically performed well even without counterfactual modeling.

\paragraph{Contributions.}
\begin{itemize}
  \item We propose Experience-Weighted Attraction with Vector Quantization for Online Decision Transformers (EWA–VQ–ODT), a lightweight module that maintains per–action (code) attractions online and injects them as a small bias on action-token attention columns in DT/ODT.
  \item We instantiate a fast vector–quantized routing scheme for continuous actions (fixed codebook, direct grid lookup), enabling constant–time mapping and batched GPU updates.
  \item We provide empirical support for this design. On D4RL benchmark tasks, our method improves average returns and early training sample efficiency over standard ODT.
  \item We conduct a theoretical study, formalizing attraction dynamics as a bounded, exponentially weighted sum. We prove that the attention drift caused by attraction bias is safely limited in total variation.
\end{itemize}

\section{Related Work}
\label{sec:related}

\subsection{Transformers for Reinforcement Learning}
\label{subsec:rw-transformers}

A growing line of work recasts reinforcement learning as sequence modeling. Early evidence showed that high--capacity sequence predictors can model trajectories and even support planning with generic decoding procedures \citep{janner2021offline}. Building on this view, \citet{chen2021decision} introduced the \emph{Decision Transformer} (DT), which conditions a causal Transformer on desired return, past states, and actions to generate actions autoregressively, matching strong offline RL baselines without explicit value functions. \citet{zheng2022online} extended this paradigm to interactive settings with the \emph{Online Decision Transformer} (ODT), blending offline pretraining and online finetuning via sequence--level entropy regularization for sample--efficient exploration.

Applying Transformers in RL also raises stability and credit--assignment challenges. \citet{parisotto2020stabilizing} identified optimization issues of standard Transformers under RL objectives and proposed GTrXL, architectural modifications that improve stability and performance across memory--intensive tasks. Complementary advances address long--horizon credit propagation; for example, \citet{malkin2022trajectory} introduced trajectory balance objectives in GFlowNets to propagate credit more efficiently along action sequences. Other works combine sequence modeling with value--based structure: \citet{hu2024q} regularize the Transformer with learned $Q$--values to better stitch optimal segments from suboptimal data in offline RL.

Scaling trends in control further support the utility of Transformer policies. RT--1 \citep{brohan2022rt} trains a robotics Transformer on large, diverse real--world data and demonstrates broad generalization across tasks, indicating that high--capacity sequence models can absorb heterogeneous experience in control domains.

Our work targets a complementary axis: \emph{explicit, action--specific memory}. Rather than modifying architectures for stability \citep{parisotto2020stabilizing} or augmenting objectives with values \citep{hu2024q}, we inject a lightweight, EWA--inspired per--action attraction signal into attention columns of DT/ODT. This preserves the return--conditioning benefits of sequence modeling \citep{chen2021decision,zheng2022online} while providing a persistent, decaying ``mental account'' of action outcomes that improves credit flow with minimal overhead and no counterfactual modeling.

\subsection{Cognitively Grounded RL and Experience--Weighted Attraction}
\label{subsec:rw-cognitive-ewa}

Behavioral and cognitive theories provide formal accounts of how humans accumulate and use action--specific experience. \citet{camerer2003behavioral} synthesizes extensive experimental evidence into behavioral game theory, emphasizing learning dynamics and limited strategic depth; a central component is \emph{experience--weighted attraction} (EWA), where per--action ``attractions'' aggregate outcomes with depreciation, predicting regularities in repeated interactive settings. Cognitive architectures such as ACT--R \citep{anderson2009can} instantiate persistent declarative and procedural memories with activation and decay, while instance--based learning (IBL) \citep{gonzalez2003instance} explains dynamic choice by retrieving and blending past situation--action--utility instances with recency and utility weighting. These strands motivate maintaining explicit, decaying, action--specific memories---the conceptual basis for our ``mental accounts'' of success and failure.

Cognitively informed RL has also been explored in decision settings that require interpretability and robustness, including human–AI teaming for security. Cognitive–hierarchy models of bounded rationality \citep{camerer2004cognitive} and prospect–theoretic accounts of subjective valuation under uncertainty \citep{kahneman2013prospect} have been instantiated in deep RL for cloud defense and analyzed for their impact on policy learning \citep{aref2025human,aref2022impact}. In web-based human-in-the-loop DRL experiments on Amazon Mechanical Turk, \citet{aref2025human} report decision patterns consistent with Prospect Theory (PT) and Cumulative Prospect Theory (CPT): participants tend to avoid re-selection after failure but persist after success, reflecting heightened loss sensitivity and biased probability weighting (underestimating gains after loss and overestimating continued success). This asymmetry is also aligned with EWA’s mechanism—decaying attractions for unchosen/unsuccessful actions and reinforcement of attractions after successful choices. Collectively, these studies underscore the value of explicit, human-interpretable biases and persistent memory mechanisms when actions must be justified and robust to nonstationarity.

Within RL, several mechanisms echo these principles by adding fast memory or sharpening credit assignment. Neural Episodic Control \citep{pritzel2017neural} augments deep agents with a semi--tabular episodic store for rapidly updated value estimates, enabling swift assimilation of new experience under sparse reward. RUDDER \citep{arjona2019rudder} tackles delayed credit by redistributing returns and decomposing contributions, converting long--horizon credit into a supervised regression problem. Resource--rational RL \citep{binz2022modeling} models human exploration as bounded optimality under computational costs, supporting lightweight memory and priors as plausible inductive biases. In complementary directions, preference--based RL and human feedback pipelines---from policy shaping with direct human labels \citep{griffith2013policy} and learning from pairwise preferences \citep{christiano2017deep} to large--scale RLHF \citep{bai2022training}, fine--grained segment--level rewards \citep{wu2023fine}, few--shot preference models \citep{hejna2023few}, and personalized latent preference learning \citep{poddar2024personalizing}---demonstrate that simple, interpretable supervisory signals can reliably steer complex policies. Robust adversarial model--based offline RL \citep{rigter2022rambo} addresses stability under distribution shift and is orthogonal to cognitive inductive biases. Closer to sequence models, hierarchical/in--context formulations \citep{huang2024context} reduce decision horizons by structuring control, suggesting a productive interface between cognitive principles and transformers.

Our contribution differs in both target and mechanism. Rather than learning external reward models or episodic memories, we integrate a \emph{per--action, decaying attraction} directly into the transformer's attention via a small bias on action--token columns. This EWA--inspired signal is vector--quantized for continuous actions, learned online from observed rewards (positive and negative), and requires no counterfactual modeling or architectural changes. In effect, we endow Decision/Online Decision Transformers with an explicit, interpretable memory of action outcomes that persistently shapes credit flow, complementing prior cognitive and human--feedback approaches.

\subsection{Action Representation and Quantization}
\label{subsec:rw-quantization}

Representing continuous actions with compact discrete surrogates can simplify learning, improve stability, and enable structured inductive biases. A prominent line of work learns discrete latent codes with vector quantization. \citet{Oord2017NeuralDR} introduced VQ--VAE, which pairs an autoregressive prior with a learned codebook to obtain discrete latents that avoid posterior collapse and support high–fidelity generation. Building on this idea for control, \citet{luo2023action} propose state–conditioned action quantization via VQ--VAE in offline RL, showing that discretizing the action space allows conservative objectives (e.g., IQL, CQL, BRAC) to be applied more precisely, yielding substantial gains on robotics benchmarks.

Beyond direct action discretization, several strands learn \emph{action abstractions} or latent representations that can serve as surrogates. Unsupervised skill discovery methods such as DIAYN \citep{eysenbach2018diversity} maximize diversity to produce a repertoire of discrete skills that can be composed or fine–tuned for downstream tasks, effectively quantizing behavior space. Latent state--space approaches, e.g., DeepMDP \citep{gelada2019deepmdp}, learn compact predictive representations that facilitate planning and policy learning; although continuous, these latents can be clustered or discretized when needed to provide decision tokens.

Kernelized clustering provides an alternative, \emph{soft} form of quantization in which assignments are driven by similarities in a reproducing kernel space. Kernel $k$--means and its spectral counterparts \citep{dhillon2004kernel} (and soft variants via fuzzy clustering) enable similarity–aware grouping without explicit parametric codebooks, at the cost of potentially higher per–step assignment overhead.

We target a lightweight, \emph{online} surrogate representation compatible with transformer policies. Instead of learning a state–conditioned codebook \citep{luo2023action} or maintaining kernel similarities \citep{dhillon2004kernel}, we use a fixed, grid–derived codebook with direct lookup to achieve constant–time routing. On these discrete surrogates, we maintain an EWA–style per–action memory and inject it as a small bias into attention. This retains many practical benefits of discretization (stable routing, small memory) while enabling an explicit, cognitively motivated mechanism for credit flow over actions.

\section{Preliminaries}
\label{sec:prelim}

\paragraph{Transformer Architecture.}
Transformers are powerful sequence models that leverage multi-head self-attention to capture long-range dependencies across input sequences \citep{vaswani2017attention}. In the causal (decoder-only) transformer used for control tasks, a sequence of input tokens \( x_1, x_2, \ldots, x_T \) is mapped into query, key, and value vectors via learned weight matrices: \( Q_t = x_t W^Q, K_t = x_t W^K, V_t = x_t W^V \). Attention weights are then computed as
\begin{equation}
\text{Attention}(Q,K,V) = \text{softmax}\left(\frac{QK^\top}{\sqrt{d_k}}\right)V,
\end{equation}
with the attention mask enforcing causality by preventing each token from attending to future positions.

Transformer models serve as the foundation for Decision Transformers, which repurpose this architecture for reinforcement learning by modeling return-conditioned trajectories.

\paragraph{Reinforcement learning.}
Standard reinforcement learning (RL) formalizes sequential decision-making as the solution to a Markov Decision Process (MDP) \((\mathcal{S}, \mathcal{A}, P, r, \gamma)\) \citep{sutton2018reinforcement}. At time \(t\), the agent observes state \(s_t\), selects an action \(a_t\), receives a reward \(r_t = r(s_t, a_t)\), and transitions to a new state \(s_{t+1} \sim P(\cdot \mid s_t, a_t)\). The goal is to find a policy \(\pi(a_t \mid s_t)\) that maximizes the expected discounted return:
\begin{equation}
\max_\pi \; \mathbb{E}_\pi\left[\sum_{k=0}^{\infty} \gamma^k r_{t+k} \right].   
\end{equation}

While classical RL methods rely on value functions, Q-learning, or policy gradients, recent advances have introduced transformer-based models that treat trajectory optimization as a sequence prediction task, thereby enabling offline learning from demonstration data.

\paragraph{Decision Transformers.}
Decision Transformers (DT; \citet{chen2021decision}) reframe policy learning as a supervised sequence modeling problem. A trajectory is represented as a sequence of return-to-go \(g_t\), state \(s_t\), and action \(a_t\) tokens, i.e., \(\tau = (g_1, s_1, a_1, \ldots, g_K, s_K, a_K)\), where \(g_t = \sum_{k=t}^K \gamma^{k - t} r_k\). Here, $K$ is a hyperparameter denoting the context length for the transformer. These tokens are embedded and passed through a causal transformer that autoregressively predicts the action \(a_t\) based on prior tokens. 

By conditioning on desired returns, DT enables offline imitation of expert behavior. However, it lacks support for online interaction and cannot adapt to changes in environment dynamics or reward structures.
Online Decision Transformers extend this formulation to interactive, online settings by enabling real-time data collection and adaptation.

\paragraph{Online Decision Transformer.}

Online Decision Transformer (ODT; \citet{zheng2022online}) extends Decision Transformers to interactive online reinforcement learning, enabling agents to collect fresh experience and adapt continuously to nonstationary environments. Unlike DT, which trains solely on fixed offline datasets, ODT leverages newly acquired trajectories to improve its policy in real time.

At each timestep $t$, the ODT agent observes a state $s_t$, takes an action $a_t$, receives a reward $r_t$, and updates the return-to-go $g_t$. Sequences of triplets $(g_t, s_t, a_t)$ are passed to a causal transformer trained to predict the next action conditioned on prior tokens.

The policy is modeled as a stochastic distribution over actions, typically a multivariate Gaussian:

\begin{equation}
    \pi_\theta(a_t \mid s_{-K, t}, g_{-K, t}) = \mathcal{N} \big( \mu_\theta(s_{-K, t}, g_{-K, t}), \Sigma_\theta(s_{-K, t}, g_{-K, t}) \big),
\end{equation}
where $\Sigma_\theta$ is assumed to be diagonal. Training minimizes the negative log-likelihood (NLL) of observed actions over $K$ steps:

\begin{equation}
    \mathcal{J}(\theta) = \frac{1}{K} \mathbb{E}_{(a, s, g) \sim \mathcal{T}} \left[ \sum_{k=1}^{K} - \log \pi_\theta(a_k \mid s_{-K, k}, g_{-K, k}) \right].
\end{equation}

To promote exploration during online finetuning, ODT imposes a sequence-level entropy constraint on the policy:

\begin{equation}
    \mathcal{H}_{\theta}^{\tau}[r(a \mid s, g)] = \frac{1}{K} \, \mathbb{E}_{(s, g) \sim \mathcal{T}} \left[ \sum_{k=1}^{K} \mathcal{H} \left[ \pi_\theta(a_k \mid s_{-K, k}, g_{-K, k}) \right] \right] \geq \beta,
\end{equation}

where $\mathcal{H} \left[ \pi_\theta(a_k) \right]$ denotes the Shannon entropy of distribution $\pi_\theta(a_k)$, $\beta$ is a prefixed minimum entropy threshold, and $\mathcal{T}$ denotes the data distribution (offline or replay-buffer based).

This gives rise to the following constrained optimization problem:

\begin{equation}
    \min_{\theta} \mathcal{J}(\theta) \quad \text{subject to} \quad \mathcal{H}_{\theta}^{\tau}[r(a \mid s, g)] \geq \beta.
\end{equation}

To solve this, ODT formulates a dual problem using the Lagrangian:

\begin{equation}
    \mathcal{L}(\theta, \lambda) = \mathcal{J}(\theta) + \lambda (\beta - \mathcal{H}_{\theta}^{\tau}[r(a \mid s, g)]),
\end{equation}

and optimizes it via alternating gradient updates over $\theta$ and the dual variable $\lambda \geq 0$.

Unlike soft actor-critic (SAC; \citet{haarnoja2018soft}), where the objective is based on return maximization, ODT retains the supervised NLL form. The entropy term $\mathcal{H}_{\theta}^{\tau}$ can be interpreted as a cross-entropy during offline training (due to data mismatch) and converges to true entropy during online adaptation, thereby bridging offline imitation and online exploration in a unified transformer-based framework.

Compared to Decision Transformer (DT), ODT supports online learning by actively adapting to newly collected trajectories rather than relying solely on static datasets. This allows for policy improvement over time in dynamic environments.

Compared to traditional reinforcement learning methods, ODT eliminates the need for value function approximation, temporal-difference (TD) bootstrapping, and handcrafted reward shaping. Classical RL algorithms often suffer from instability due to value estimation errors and delayed credit assignment. ODT bypasses these challenges by directly modeling the return-conditioned action distribution, leading to more stable training and better performance in sparse-reward or high-variance environments.

Despite these advantages, ODT does not explicitly account for cognitive dynamics such as regret, action inertia, or memory-based attention—factors known to influence human decision-making. To address this gap, we integrate the Experience–Weighted Attraction (EWA) model into the ODT framework. This cognitive extension introduces structured biases into attention and memory, allowing us to simulate and study more human-like sequential decision behavior.

\paragraph{Experience--Weighted Attraction (EWA).}
EWA \citep{camerer1999experience} is a bounded-rationality model from behavioral game theory that maintains, for each action \(j\), an \emph{attraction} \(A_j(t)\) that aggregates past payoff evidence with exponential decay. Its canonical update is given by:
\begin{equation}
    A_j(t) =
\frac{(1-\phi) N(t-1) A_j(t-1) + \left[\delta + (1-\delta)\,\mathbf{1}\{j = j_t\}\right] \pi_j(t)}{N(t)},
\qquad
N(t) = (1 - \rho) N(t - 1) + 1,
    \label{eq:ewa-canonical}
\end{equation}
where \(\phi \in (0,1)\) depreciates attractions, \(\rho \in (0,1)\) depreciates experience, and \(\delta \in [0,1]\) balances realized and foregone payoffs. Attractions persist for all actions, decaying over time even when unchosen.

We integrate EWA-inspired attention into ODT to simulate human-like decision patterns that emphasize inertia, long-term memory, and differential treatment of past actions. This allows us to investigate how cognitive biases may improve robustness and interpretability in transformer-based control.

\section{Method}

\paragraph{Overview.}
Figure~\ref{fig:ewa_vq_odt} summarizes the proposed EWA--VQ--ODT framework.
\textbf{(A)} A length-$K$ action context is maintained, and the current action $a_t$ is routed by a grid lookup to a discrete code $j_t$ in a fixed vector-quantized codebook.
\textbf{(B)} Each code $\ell$ holds an attraction $A_\ell(t)$ updated online by a simplified EWA rule with global decay $(1-\phi)$ and reinforcement $\delta\,\tilde r_t$ \emph{only} for the chosen code $j_t$ (Eq.~\eqref{eq:ewa-simplified}); the up arrow marks reinforcement for $j_t$ while dashed down arrows indicate decay on others.
\textbf{(C)} The attraction vector $A$ is scaled to a column bias $B=\beta A$ and added to the pre-softmax attention logits \emph{only} on action-token columns (Eq.~\eqref{eq:attn-bias}), gently steering attention toward actions with higher recent attraction without changing the backbone.

\begin{figure}[t]
  \centering
  \includegraphics[width=\linewidth]{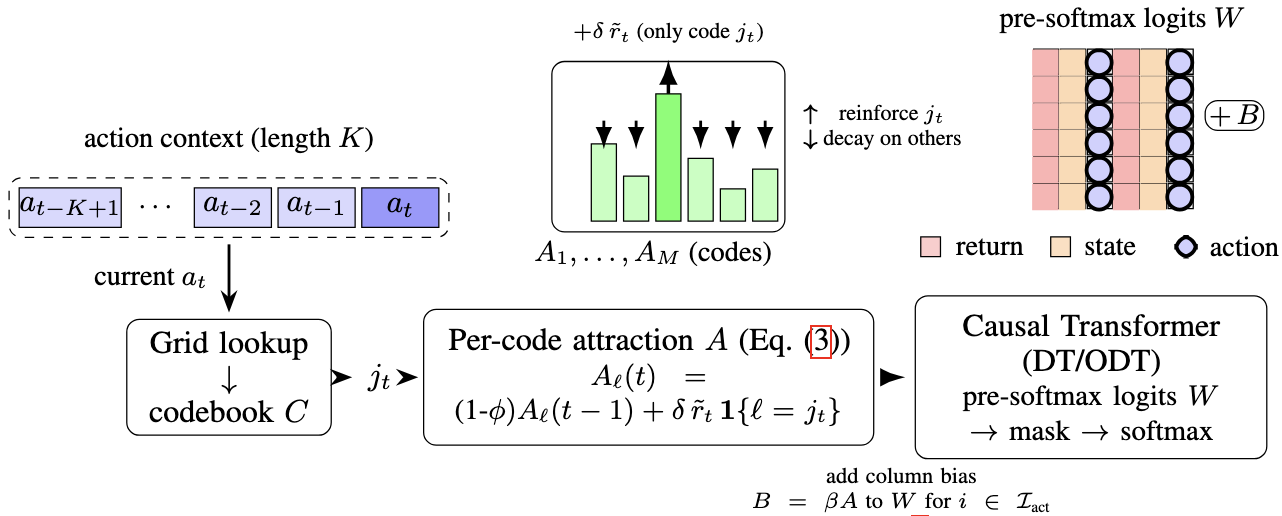}
  \caption{\textbf{EWA--VQ--ODT overview.} (A) The last $K$ actions form the context; the current action $a_t$ is routed by a grid lookup (cell$\!\to\!$code table $T$) to code $j_t$. 
(B) Per-code attractions implement a recency-weighted memory via decay $(1-\phi)$ and reinforcement $\delta\,\tilde r_t$ on the chosen code (Eq.~\eqref{eq:ewa-simplified}). 
(C) Attractions add a pre-softmax \emph{column bias} $B=\beta A$ to the action-token \emph{columns} of the attention logits $W$ (Eq.~\eqref{eq:attn-bias}); highlighted heatmap columns indicate $i\in\mathcal I_{\text{act}}$.}
\label{fig:ewa_vq_odt}
\end{figure}

\subsection{Experience--Weighted Attraction (EWA) for Decision Transformers}
\label{subsec:ewa}

We endow a Decision/Online Decision Transformer with an explicit, per--action–surrogate memory by maintaining a scalar \emph{attraction} for each vector–quantized code (Sec.~\ref{subsec:vq-grid}) and updating it online. Let $j_t$ be the routed code for action $a_t$. 
Our EWA-inspired update is \begin{equation} \label{eq:ewa-simplified} A_\ell(t) \;=\; (1-\phi)\,A_\ell(t-1) \;+\; \delta\,\tilde r_t\,\mathbf{1}\{\ell=j_t\}, \end{equation}
with $\phi\!\in\!(0,1)$ (forgetting), $\delta\!\in\![0,1]$ (chosen–action weight), and $\tilde r_t$ a centered/clipped reward so that positive outcomes increase attraction (“success”) and negative outcomes decrease it (“failure”). Thus, previously observed but unchosen codes retain a decaying memory; the chosen code receives the incremental reinforcement. Unseen codes remain at zero until first routed. The resulting $A_{j_t}(t)$ is later injected as a small column–bias into action-token attention logits (Eq.~\eqref{eq:attn-bias}).

\paragraph{Why the simplified form?}
The canonical EWA (Sec.~\ref{sec:prelim}) normalizes by $N(t)$ and can include counterfactual payoffs. We omit both for (i) \textit{compatibility} with online sequence models (no counterfactual model required), (ii) \textit{stability/efficiency} (constant-time, vectorized updates without a running denominator), and (iii) \textit{calibration} (the attention bias absorbs scale). Empirically, Eq.~\eqref{eq:ewa-simplified} preserves the intended recency-weighted reinforcement signal while keeping the mechanism lightweight and easy to integrate.

\begin{proposition}[Closed form, boundedness, steady state]
Let the per-code attraction follow the simplified EWA update
\[
A_\ell(t) = (1-\phi) A_\ell(t-1) + \delta\,\tilde r_t\,\mathbf{1}\{\ell=j_t\}, 
\quad \phi \in (0,1), \quad |\tilde r_t|\le R .
\]
Then $A_\ell(t)$ admits a closed form, is uniformly bounded by 
$\delta R/\phi$, and under stationarity satisfies
\[
\lim_{t \to \infty} \, \mathbb{E}[A_\ell(t)] \;=\; \frac{\delta\,p_\ell\,\mu_\ell}{\phi}.
\]

\label{prop:ewa_closedform}
\end{proposition}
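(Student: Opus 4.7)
The plan is to handle the three claims in the natural order (closed form $\Rightarrow$ boundedness $\Rightarrow$ steady state), since each one feeds the next. First I would iterate the linear recursion \eqref{eq:ewa-simplified}: writing $u_s := \delta\,\tilde r_s\,\mathbf{1}\{j_s=\ell\}$, a straightforward induction on $t$ gives the closed form
\begin{equation*}
A_\ell(t) \;=\; (1-\phi)^{t}\,A_\ell(0) \;+\; \delta\sum_{s=1}^{t}(1-\phi)^{t-s}\,\tilde r_s\,\mathbf{1}\{j_s=\ell\}.
\end{equation*}
This is the standard unrolling of a first-order linear difference equation with a driving term, and it is the workhorse for the remaining two claims.

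For the uniform bound, I would apply the triangle inequality to the closed form, use $|\tilde r_s|\le R$ and $\mathbf{1}\{j_s=\ell\}\le 1$, and sum the geometric series:
\begin{equation*}
|A_\ell(t)| \;\le\; (1-\phi)^{t}\,|A_\ell(0)| + \delta R\sum_{s=1}^{t}(1-\phi)^{t-s} \;\le\; (1-\phi)^{t}\,|A_\ell(0)| + \frac{\delta R\,\bigl(1-(1-\phi)^{t}\bigr)}{\phi}.
\end{equation*}
Taking the standard initialization $A_\ell(0)=0$ (or assuming $|A_\ell(0)|\le \delta R/\phi$, which is an invariant set under the recursion) yields $|A_\ell(t)|\le \delta R/\phi$ for all $t$. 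I would explicitly note the invariance argument, since then the bound is genuinely uniform in $t$ rather than only asymptotic.

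For the steady state I would take expectations of the closed form and invoke stationarity: let $p_\ell := \Pr(j_s=\ell)$ and $\mu_\ell := \mathbb{E}[\tilde r_s \mid j_s=\ell]$, both assumed independent of $s$. By the tower property,
\begin{equation*}
\mathbb{E}\!\left[\tilde r_s\,\mathbf{1}\{j_s=\ell\}\right] \;=\; p_\ell\,\mu_\ell,
\end{equation*}
so
\begin{equation*}
\mathbb{E}[A_\ell(t)] \;=\; (1-\phi)^{t}\,A_\ell(0) \;+\; \delta\,p_\ell\,\mu_\ell\cdot\frac{1-(1-\phi)^{t}}{\phi}.
\end{equation*}
Since $\phi\in(0,1)$ implies $(1-\phi)^{t}\to 0$, the limit is $\delta\,p_\ell\,\mu_\ell/\phi$, as claimed.

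The technical content is mostly routine geometric-series manipulation; the only place that deserves care, and which I would flag as the main subtlety, is the precise meaning of ``stationarity.'' The identity $\mathbb{E}[\tilde r_s\,\mathbf{1}\{j_s=\ell\}] = p_\ell\mu_\ell$ requires that the joint law of $(j_s,\tilde r_s)$ be time-invariant; in the online setting $j_s$ is produced by an evolving policy, so I would state this as a hypothesis (e.g.\ conditioning on a frozen policy or a fixed behavioral mixture) rather than something derived. With that clarification, dominated convergence (using the uniform bound just established) justifies interchanging limit and expectation, and the three claims follow.
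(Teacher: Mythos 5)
Your proposal is correct and follows essentially the same route as the paper's proof: unroll the recursion for the closed form, bound via the triangle inequality and the geometric series $\sum_k (1-\phi)^k \le 1/\phi$, and obtain the steady state from $\mathbb{E}[\tilde r_t\,\mathbf{1}\{j_t=\ell\}] = p_\ell\mu_\ell$ (the paper solves the expected one-step recursion as a linear difference equation, while you sum the expected closed form directly, but these are the same computation). Your added remarks --- that $\{|A|\le \delta R/\phi\}$ is an invariant set, and that stationarity of the joint law of $(j_t,\tilde r_t)$ is a genuine hypothesis in the online setting --- are sound refinements rather than departures.
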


\begin{proof}[Proof sketch]
Unrolling the recursion yields the closed form and boundedness via a geometric sum.  
Under stationarity, the expectation satisfies a linear recursion with fixed point $(\delta p_\ell \mu_\ell)/\phi$.  
Full details are deferred to Appendix~\ref{app:ewa_proof}. \qedhere
\end{proof}

The proposition \ref{prop:ewa_closedform} implies that $A_\ell(t)$ is an exponentially weighted sum of realized rewards for code~$\ell$, with weight $(1-\phi)^{\,t-\tau}$ on the outcome at time~$\tau$; older events thus contribute less. With zero initialization, the attraction is uniformly bounded as $
|A_\ell(t)| \;\le\; \frac{\delta R}{\phi}.
$
Under stationarity with $\Pr(j_t=\ell)=p_\ell$ and $\mathbb{E}[\tilde r_t\mid j_t=\ell]=\mu_\ell$, the long-run expectation converges as $
\lim_{t \to \infty} \, \mathbb{E}[A_\ell(t)] \;=\; \frac{\delta\,p_\ell\,\mu_\ell}{\phi},$
increasing with $\delta$ and $p_\ell\mu_\ell$ and decreasing with the forgetting rate~$\phi$.

\subsection{Vector--Quantized Action Surrogates with Direct Grid Lookup}
\label{subsec:vq-grid}

Continuous actions \(a_t\!\in[-1,1]^d\) are routed to \(M\) discrete \emph{codes} \(C=\{c_i\}_{i=1}^{M}\) via vector quantization. A fixed codebook is built once from an axis-aligned grid with \(b\) bins per dimension; the grid table \(\mathsf{T}:\{0,\ldots,b^d-1\}\!\to\!\{1,\ldots,M\}\) maps each cell to its nearest code. At runtime, \(a_t\) is quantized to a cell and the code index is read from \(\mathsf{T}\): \(j_t=\mathsf{T}(\text{cell}(a_t))\). Misses (rare) fall back to a small batched \(L_2\) search. The online and offline procedures appear in Alg.~\ref{alg:ewavq} and Alg.~\ref{alg:grid}.

\begin{algorithm}[H]
\caption{EWAVQ (Vector-Quantized EWA), per trajectory}
\label{alg:ewavq}
\KwIn{Actions $\{a_t\}_{t=1}^K$, rewards $\{r_t\}_{t=1}^K$, decay $\phi$, weight $\delta$, codebook $C=\{c_i\}_{i=0}^{N-1}$, grid bins $b$, table $T$}
\KwOut{$D_{\text{codebook}}$, $D_{\text{trajectory}}$}

$A \in \mathbb{R}^N \leftarrow \mathbf{0}$;\quad $D_{\text{trajectory}} \leftarrow [\,]$\;

\Fn{\textsc{Route}$(a)$}{
  Map $a \in [-1,1]^D$ to grid coords $q$ and linear index $\ell$;\;
  \If{$\ell$ is cached in $T$}{\Return $T[\ell]$}
  \Else{\Return $\argmin_{i \in \{0,\dots,N-1\}} \|a - c_i\|_2$}
}

\For{$t \gets 1$ \KwTo $K$}{
  $i^\star \leftarrow \textsc{Route}(a_t)$\;
  $A \leftarrow (1-\phi)\,A$ \tcp*[r]{global decay (vectorized)}
  $A[i^\star] \leftarrow A[i^\star] + \delta\, r_t$\;
  append $(a_t, r_t, i^\star, A[i^\star])$ to $D_{\text{trajectory}}$\;
}

$D_{\text{codebook}} \leftarrow \{(i, A[i]) \mid A[i] \neq 0\}$\;
\Return $D_{\text{codebook}}, D_{\text{trajectory}}$\;
\end{algorithm}

\begin{algorithm}[H]
\caption{Grid-based Action Quantization (cell $\to$ code)}
\label{alg:grid}
\KwIn{Dim $D$, requested codes $N_{\rm req}$, bins $b_{\rm req}$}
\KwOut{$T \in \{0,\dots,N{-}1\}^{b^D}$}

$b \leftarrow \min\!\big\{8,\,\max\!\{2,\,\lfloor N_{\rm req}^{1/D}\rfloor\}\big\}$;\quad
\While{$b^D > N_{\rm req}$ \textbf{and} $b>2$}{$b \leftarrow b-1$}
\If{$D \ge 6$ \textbf{and} $b^D > N_{\rm req}$}{$N \leftarrow \min(b^D,128)$}\Else{$N \leftarrow N_{\rm req}$}\;

\For{$i \gets 0$ \KwTo $N-1$}{
  Let $g_i \in \{0,\dots,b-1\}^D$ be the base-$b$ digits of $i$;\;
  Set code $c_i \in [-1,1]^D$ by $c_i[d] \leftarrow \tfrac{2\,g_i[d]}{b-1} - 1$ for $d=1..D$\;
}

Allocate $T$ of size $b^D$;\;
\For{cells $\ell \gets 0$ \KwTo $b^D{-}1$ \textbf{in batches}}{
  Map $\ell$ to $g_\ell$ (base-$b$) and $a_\ell[d] \leftarrow \tfrac{2\,g_\ell[d]}{b-1}-1$;\;
  $T[\ell] \leftarrow \argmin_{i \in \{0,\dots,N-1\}} \|a_\ell - c_i\|_2$ \tcp*[r]{vectorized on GPU}
}
Cache $T$ under key $(D,b,N,\text{env})$ and return\;
\end{algorithm}

\subsection{Injecting Attractions into Attention}
\label{subsec:ewa-attn}

Let \(W\in\mathbb{R}^{N\times H\times S\times S}\) denote pre-softmax attention logits over tokenized returns, states, and actions. Let \(\mathcal{I}_{\mathrm{act}}\subset\{1,\ldots,S\}\) be the action-token column indices and let \(t(i)\) be the timestep associated with action column \(i\). From \eqref{eq:ewa-simplified}, define the per-batch, per-head \emph{attraction bias} as
\begin{equation}
B_{n,h,:,i} \;=\; 
\begin{cases}
\beta\,A_{j_{t(i)}}(t(i))\,\mathbf{1}_S, & i\in\mathcal{I}_{\mathrm{act}},\\[3pt]
\mathbf{0}_S, & \text{otherwise},
\end{cases}
\qquad
\tilde W \;=\; W \;+\; B,
\label{eq:attn-bias}
\end{equation}
where \(\mathbf{1}_S\) is a length-\(S\) all-ones vector (broadcast along rows) and \(\beta>0\) is a small scalar (or per-head scale; see ablations). We then apply the usual causal/padding masks and softmax. Intuitively, \(\beta\,A_{j_{t(i)}}\) is a column-bias that increases (decreases) attention mass toward action tokens with stronger (weaker) accumulated attraction—an explicit, recency-weighted memory of success and failure.%
\footnote{We keep the attention architecture unchanged: the bias is additive, applied before masking/softmax.}

\begin{lemma}[Controlled attention drift under a bounded bias]
Let $p=\mathrm{softmax}(z)\in\Delta^n$ and $q=\mathrm{softmax}(z+b)$ for any bias $b$ with $\|b\|_\infty\le \varepsilon$.  
Then the total variation distance satisfies
\[
\|q-p\|_{\mathrm{TV}} \;\le\; \tanh(\varepsilon).
\]
\label{lem:softmax_bound}
\end{lemma}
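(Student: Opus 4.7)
The plan is to reduce the statement to a single-variable optimization by invoking the dual characterization
\[
\|q-p\|_{\mathrm{TV}} \;=\; \sup_{A\subseteq\{1,\ldots,n\}}\bigl(q(A)-p(A)\bigr),
\]
so that it suffices to bound $q(A)-p(A)$ uniformly in $A$ by $\tanh(\varepsilon)$.

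First I would fix an event $A$ and examine the inner maximization of $q(A)-p(A)$ over all admissible biases $b$ with $\|b\|_\infty\le\varepsilon$. Writing $q(A)=\sum_{i\in A}p_i e^{b_i}/Z$ with $Z=\sum_j p_j e^{b_j}$, a one-line partial-derivative calculation shows that $q(A)$ is strictly increasing in $b_i$ for $i\in A$ and strictly decreasing in $b_i$ for $i\notin A$ (the derivatives are $(p_ie^{b_i}/Z)(1-q(A))$ and $-(p_ie^{b_i}/Z)\,q(A)$, respectively). Hence the supremum over the box $[-\varepsilon,\varepsilon]^n$ is attained at the extremal bias $b_i=\varepsilon$ on $A$ and $b_i=-\varepsilon$ on $A^c$. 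I expect this extremality argument to be the main point requiring care, since one must rule out interior optima and confirm the monotonicity globally before collapsing the problem.

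With this worst-case $b$ substituted, the problem reduces to a two-point distribution parametrized by $\alpha:=p(A)$, and a short computation gives
\[
q(A)-p(A) \;=\; \frac{2\,\alpha(1-\alpha)\,\sinh(\varepsilon)}{\alpha\,e^{\varepsilon}+(1-\alpha)\,e^{-\varepsilon}}.
\]
Reparametrizing by $\beta=2\alpha-1\in[-1,1]$ and setting the derivative in $\beta$ to zero yields an optimum at $\beta^\star=(1-\cosh\varepsilon)/\sinh\varepsilon$, at which point the denominator equals $1$ and the numerator simplifies via $(\cosh\varepsilon-1)/\sinh\varepsilon=\tanh(\varepsilon/2)$.

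Since $\tanh$ is monotonically increasing on $[0,\infty)$, the sharp value $\tanh(\varepsilon/2)$ obtained in the previous step is itself dominated by $\tanh(\varepsilon)$, which delivers the stated bound. Beyond the extremality claim of Step~2, the remainder of the proof is routine scalar calculus together with the standard hyperbolic identities.
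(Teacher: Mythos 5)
Your proof is correct, and it takes a genuinely different route from the paper's. The paper works with the likelihood ratios $\lambda_i=q_i/p_i=e^{b_i}/\sum_j p_je^{b_j}$, shows each lies in $[e^{-2\varepsilon},e^{2\varepsilon}]$, and then maximizes $\tfrac12\sum_i p_i|\lambda_i-1|$ over the relaxed constraint set $\{\lambda_i\in[e^{-2\varepsilon},e^{2\varepsilon}],\ \sum_i p_i\lambda_i=1\}$, obtaining $\tanh(\varepsilon)$. You instead use the event characterization $\|q-p\|_{\mathrm{TV}}=\sup_A(q(A)-p(A))$, show by the sign of the partials that the worst bias for a fixed $A$ sits at the vertex $b=\varepsilon$ on $A$ and $-\varepsilon$ on $A^c$, and then solve the resulting one-dimensional problem in $\alpha=p(A)$ exactly. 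Your computations check out: the critical point $\beta^\star=-\tanh(\varepsilon/2)$ makes the denominator $\cosh\varepsilon+\beta^\star\sinh\varepsilon$ equal to $1$ and the value equal to $\tanh(\varepsilon/2)$ (the endpoints $\beta=\pm1$ give $0$, so this interior critical point is indeed the maximum — worth one sentence). Notably, your argument yields the strictly sharper and in fact attainable constant $\tanh(\varepsilon/2)$; the paper's relaxation is loose because $\lambda_i=e^{2\varepsilon}$ and $\lambda_j=e^{-2\varepsilon}$ cannot hold simultaneously (they would force incompatible values of the common normalizer $Z$). What the paper's route buys is brevity and a bound stated directly in terms of per-coordinate ratio control; what yours buys is tightness and an explicit worst-case configuration. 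Since $\tanh(\varepsilon/2)\le\tanh(\varepsilon)$, your conclusion implies the stated lemma.
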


\begin{proof}[Proof sketch]
The likelihood ratio $q_i/p_i$ is bounded in $[e^{-2\varepsilon}, e^{2\varepsilon}]$, which implies  
$\|q-p\|_{\mathrm{TV}} \le \tanh(\varepsilon)$.  
Full details are in Appendix~\ref{app:softmax_proof}. \qedhere
\end{proof}

Intuitively, perturbing logits by at most $\pm\varepsilon$ constrains each attention probability’s multiplicative change to $[e^{-2\varepsilon}, e^{2\varepsilon}]$; the worst case occurs when some entries get the full boost and others the full suppression, yielding the $\tanh(\varepsilon)$ bound. For small $\varepsilon$, $\tanh(\varepsilon)\!\approx\!\varepsilon$, so clipping $\|\beta A\|_\infty\le \varepsilon$ provides a simple, interpretable cap on attention drift.


\section{Experiments}
\label{sec:experiments}

\subsection{Setup}

\paragraph{Environments.}
We evaluate on two D4RL continuous-control benchmarks: \texttt{hopper-medium-v2} (action dim.\ $d{=}3$) and \texttt{walker2d-medium-replay-v2} ($d{=}6$). Datasets contain suboptimal offline trajectories; we fine-tune online following the standard offline-to-online protocol.

\paragraph{Baselines.}
We compare \textbf{EWA\textendash VQ\textendash ODT} to the \textbf{Online Decision Transformer (ODT)}. Architectures (backbone width/depth), tokenization, context length, optimizer, and training schedules are matched.

\paragraph{Training configuration.}
Unless noted: $10$ online iterations with evaluation every two iterations; batch size $64$; context length $K{=}20, 5$ for \texttt{hopper-medium-v2} and \texttt{walker2d-medium-replay-v2}, respectively; Adam ($\text{lr}{=}10^{-4}$) with linear warmup; $10$ evaluation episodes per checkpoint; $3$ random seeds.

\paragraph{EWA\textendash VQ\textendash ODT hyperparameters.}
Attention bias scale $\beta{=}0.05$; decay $\phi{=}0.05$; chosen weight $\delta{=}0.8$; codebook size $M{=}27$; grid bins chosen adaptively per dimension. 

\paragraph{Evaluation protocol.}
At each checkpoint we compute per-seed metrics; we then average across seeds at each step, and finally average those per-step means across all training steps to obtain a single scalar per (environment, algorithm, metric). Improvements are reported as relative change vs.\ ODT.

\subsection{Metrics}
\label{subsec:metrics}

We follow the ODT codebase and report seven metrics:
\begin{itemize}
    \item \texttt{evaluation/return\_mean\_gm}: geometric mean of evaluation returns
    \item \texttt{evaluation/return\_std\_gm}: variability of evaluation returns
    \item \texttt{evaluation/return\_vs\_samples}: return as a function of consumed environment steps
    \item \texttt{evaluation/length\_mean\_gm}: geometric mean of evaluation episode lengths
    \item \texttt{evaluation/length\_std\_gm}: variability of episode lengths
    \item \texttt{aug\_traj/return}: return of trajectories generated during fine-tuning
    \item \texttt{aug\_traj/length}: length of trajectories generated during fine-tuning
\end{itemize}

\subsection{Main Results}
\label{subsec:main-results}

\begin{figure}[t]
  \centering
  \includegraphics[width=0.92\linewidth]{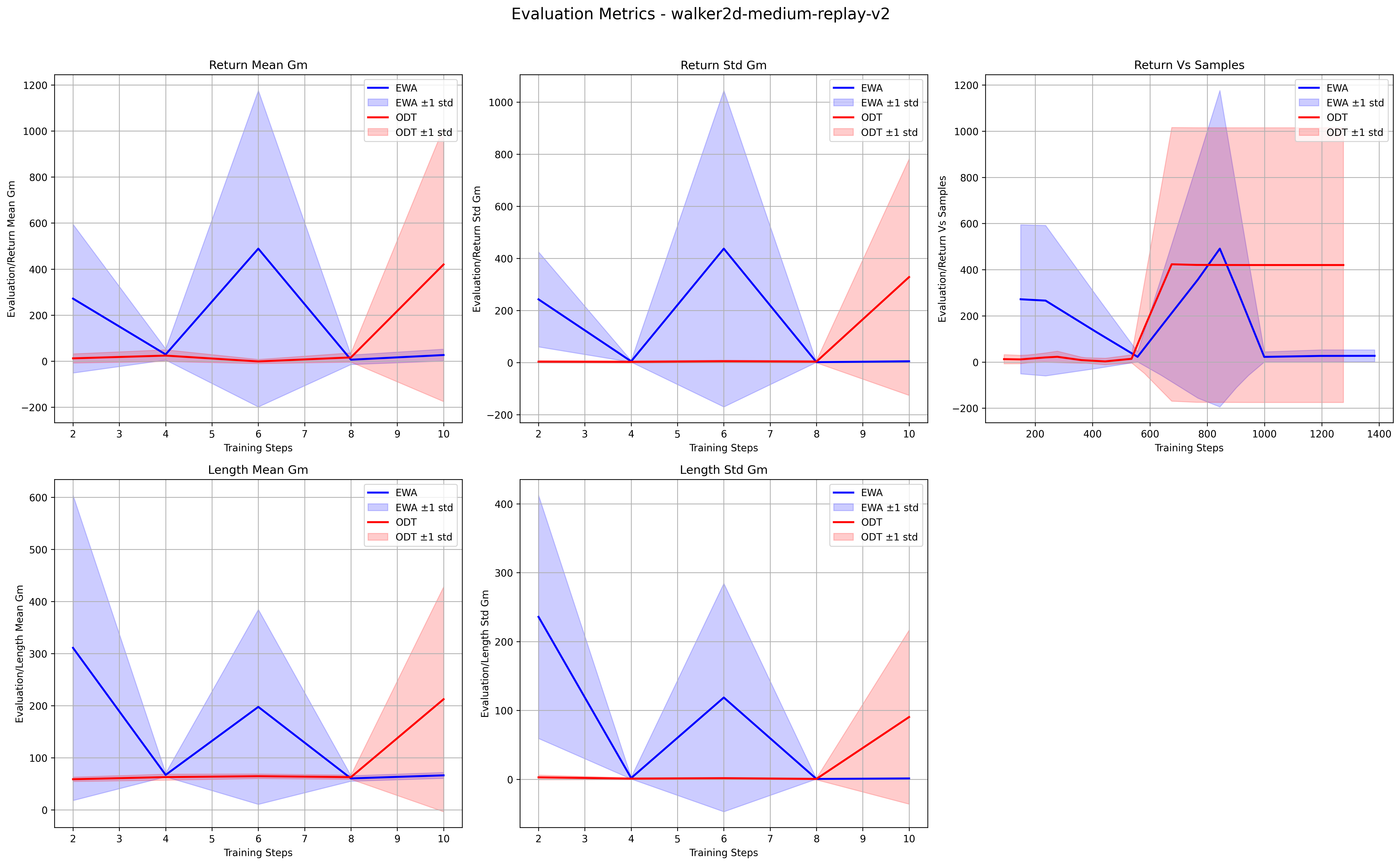}
  \vspace{0.3cm} 

  \includegraphics[width=0.92\linewidth]{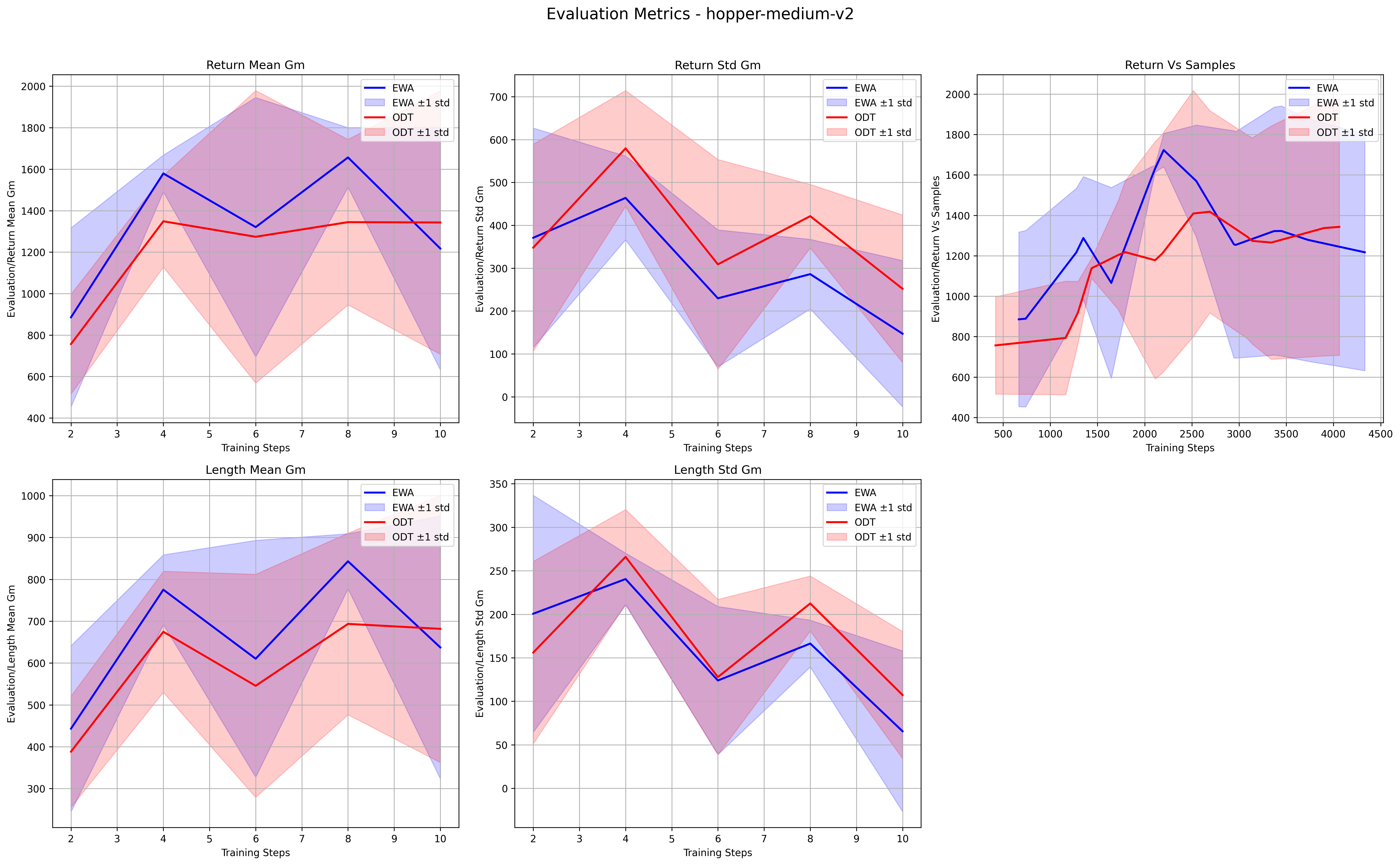}

  \caption{Evaluation curves. 
  Top: \texttt{walker2d-medium-replay-v2}, where EWA--VQ--ODT substantially improves \emph{Return Mean Gm}, \emph{Length Mean Gm}, and \emph{Return vs Samples}, with higher mid-training variance reflecting volatility in the higher-dimensional action space. 
  Bottom: \texttt{hopper-medium-v2}, where gains are modest but consistent: EWA--VQ--ODT raises \emph{Return Mean Gm}, \emph{Length Mean Gm}, and \emph{Return vs Samples}, while reducing variance for more stable policies. 
  Shaded bands show $\pm 1$ geometric std across seeds.}
  \label{fig:eval-results}
\end{figure}

\begin{table}[t]
\centering
\small
\caption{Summary of EWA--VQ--ODT vs.\ ODT on \texttt{walker2d-medium-replay-v2}. 
Averages are over seeds 1--3 and training steps (gm = geometric mean). 
A \checkmark indicates improvement is desirable.}
\label{tab:results-walker}
\begin{tabular}{lrrr}
\toprule
\textbf{Metric} & \textbf{EWA--VQ--ODT Avg} & \textbf{ODT Avg} & \textbf{$\Delta$ vs.\ ODT (\%)} \\
\midrule
evaluation/return\_mean\_gm    & 167.1 & 82.4  & +102.9\% \checkmark \\
evaluation/return\_std\_gm     & 155.3 & 57.4  & +170.6\% \(\times\) \\
evaluation/return\_vs\_samples & 179.3 & 123.4 & +45.3\% \checkmark \\
evaluation/length\_mean\_gm    & 134.5 & 87.4  & +53.9\% \checkmark \\
evaluation/length\_std\_gm     & 70.1  & 16.2  & +331.7\% \(\times\) \\
aug\_traj/return               & 87.5  & 64.0  & +36.7\% \checkmark \\
aug\_traj/length               & 116.1 & 90.5  & +28.3\% \checkmark \\
\bottomrule
\end{tabular}
\end{table}

\begin{table}[t]
\centering
\small
\caption{Summary of EWA--VQ--ODT vs.\ ODT on \texttt{hopper-medium-v2}. 
Averages are over seeds 1--3 and training steps (gm = geometric mean).}
\label{tab:results-hopper}
\begin{tabular}{lrrr}
\toprule
\textbf{Metric} & \textbf{EWA--VQ--ODT Avg} & \textbf{ODT Avg} & \textbf{$\Delta$ vs.\ ODT (\%)} \\
\midrule
evaluation/return\_mean\_gm    & 1343.9 & 1254.8 & +7.1\% \checkmark \\
evaluation/return\_std\_gm     & 307.9  & 403.3  & $-23.6$\% \checkmark \\
evaluation/return\_vs\_samples & 1325.1 & 1222.1 & +8.4\% \checkmark \\
evaluation/length\_mean\_gm    & 680.6  & 629.0  & +8.2\% \checkmark \\
evaluation/length\_std\_gm     & 162.3  & 187.2  & $-13.3$\% \checkmark \\
aug\_traj/return               & 867.9  & 884.5  & $-1.9$\% \(\times\) \\
aug\_traj/length               & 357.5  & 376.8  & $-5.1$\% \(\times\) \\
\bottomrule
\end{tabular}
\end{table}

Across tasks, curves (Figs.~\ref{fig:eval-results}) shows that EWA--VQ--ODT accelerates learning and improves sample efficiency (\texttt{return\_vs\_samples}). 
Tables~\ref{tab:results-walker}--\ref{tab:results-hopper} confirm that on \texttt{walker2d-medium-replay-v2} it more than doubles average returns and improves length-based metrics, albeit with higher variance. 
On \texttt{hopper-medium-v2}, improvements are smaller but stable, with reduced variance and more consistent learning.

\paragraph{Discussion.}
Results support the hypothesis that a recency-weighted per-action memory provides a useful inductive bias: it accelerates learning and, in lower-dimensional tasks (Hopper), stabilizes outcomes. 
In higher dimensions (Walker2d), the mechanism can over-reinforce frequent codes, motivating refinements such as per-head bias scaling, reward normalization, and codebook adjustments.

\paragraph{Compute Overhead.}
EWA--VQ--ODT adds only a GPU-resident vector of attractions and a bias to attention logits. 
Routing is $O(d)$ via grid lookup; updates are vectorized indexed adds. 
Training-time overhead is negligible relative to ODT.


\section{Conclusion}
\label{sec:conclusion}

We introduced \textbf{EWA--VQ--ODT}, a minimal, cognitively inspired add–on that equips Decision/Online Decision Transformers with an explicit per–action outcome memory. Continuous actions are routed to a compact vector–quantized codebook via direct grid lookup; each code maintains a decayed attraction updated online by a simplified EWA rule. We inject these attractions as a small bias on action–token attention columns, leaving the backbone, objective, and training loop unchanged. On standard continuous–control benchmarks, EWA--VQ--ODT improves average return and sample efficiency over a strong ODT baseline, while adding negligible computational overhead and offering interpretable diagnostics via attraction traces.

Beyond empirical gains, we provided two simple theoretical analyses: a closed–form characterization of the attraction process establishing boundedness and an expected steady state, and a softmax perturbation bound that constrains attention drift when the bias is clipped. These results justify stable scaling of the decay/bias parameters and support safe integration into transformer attention.

\paragraph{Limitations.}
Gains are environment–dependent: higher–dimensional action spaces can exhibit larger mid–training variance or late–training regressions if the bias scale or codebook granularity are mis–tuned. A fixed codebook trades similarity generalization for routing speed, and hard assignments introduce quantization error. Our study focuses on simulator benchmarks; human–subject evaluations and broader domains are left for future work.

\paragraph{Outlook.}
Promising directions include (i) per–head or token–wise learned bias scales and normalization of the attraction signal; (ii) learned or hierarchical codebooks and soft/state–conditioned routing; (iii) coupling EWA–style memory with value– or model–based credit assignment; and (iv) expanded evaluation (more seeds, tasks, and real–world settings). Overall, explicit, low–dimensional memories of action “success’’ and “failure’’ provide a practical inductive bias for transformer–based RL, complementing content–based attention without complicating the architecture.

\bibliographystyle{plainnat}  
\bibliography{main}

\clearpage
\appendix
\section{Proofs of Theoretical Results}

\subsection{Proof of Proposition~\ref{prop:ewa_closedform}}
\label{app:ewa_proof}

\textbf{Proposition~\ref{prop:ewa_closedform}} [Closed form, boundedness, steady state]

Let the per-code attraction follow the simplified EWA update
\[
A_\ell(t) = (1-\phi) A_\ell(t-1) + \delta\,\tilde r_t\,\mathbf{1}\{\ell=j_t\}, 
\quad \phi \in (0,1), \quad |\tilde r_t|\le R .
\]
Then $A_\ell(t)$ admits a closed form, is uniformly bounded by 
$\delta R/\phi$, and under stationarity satisfies
\[
\lim_{t \to \infty} \, \mathbb{E}[A_\ell(t)] \;=\; \frac{\delta\,p_\ell\,\mu_\ell}{\phi}.
\]

\begin{proof}

As described in the main text, $A_\ell(t)$: the “attraction” (memory) for action code $\ell$ after step $t$.
Update rule (simplified EWA) is:
  \[
  A_\ell(t)=(1-\phi)\,A_\ell(t-1)+\delta\,\tilde r_t\,\mathbf{1}\{\ell=j_t\},
  \]
where $0<\phi<1$ is the decay rate, $\delta$ scales new rewards, $\tilde r_t$ is bounded with $|\tilde r_t|\le R$, $j_t$ is the code chosen at time $t$, and $\mathbf{1}\{\ell=j_t\}$ is the indicator of choosing $\ell$.

Every step shrinks all memories by $(1-\phi)$ and adds $\delta\tilde r_t$ only to the chosen code’s memory.

\subsection*{Step 1: Closed form}
Unrolling the recursion gives:
\[
\begin{aligned}
A_\ell(t)
&=(1-\phi)A_\ell(t-1)+\delta\,\tilde r_t\,\mathbf{1}\{\ell=j_t\}\\
&=(1-\phi)\Big[(1-\phi)A_\ell(t-2)+\delta\,\tilde r_{t-1}\,\mathbf{1}\{\ell=j_{t-1}\}\Big]+\delta\,\tilde r_t\,\mathbf{1}\{\ell=j_t\}\\
&=(1-\phi)^2A_\ell(t-2)+(1-\phi)\delta\,\tilde r_{t-1}\mathbf{1}\{\ell=j_{t-1}\}+\delta\,\tilde r_t\mathbf{1}\{\ell=j_t\}\\
&\;\;\vdots\\
&=(1-\phi)^t A_\ell(0)+\delta\sum_{\tau=1}^{t}(1-\phi)^{\,t-\tau}\,\tilde r_\tau\,\mathbf{1}\{\ell=j_\tau\}.
\end{aligned}
\]
It demonstrates that $A_\ell(t)$ is a geometrically decayed sum of past realized rewards for code $\ell$.

 \subsection*{Step 2: Uniform boundedness}

We start from the unrolled form, which is already established:
\[
A_\ell(t) \;=\; (1-\phi)^t A_\ell(0)\;+\;\delta\sum_{\tau=1}^{t}(1-\phi)^{t-\tau}\,\tilde r_\tau\,\mathbf{1}\{\ell=j_\tau\}.
\]

First, apply the triangle inequality and absolute values to separate the two terms:
\[
|A_\ell(t)| \;\le\; |(1-\phi)^t A_\ell(0)| \;+\; \delta \left|\sum_{\tau=1}^{t}(1-\phi)^{t-\tau}\,\tilde r_\tau\,\mathbf{1}\{\ell=j_\tau\}\right|.
\]
This step follows because $|x+y|\le|x|+|y|$.

Next, pull the absolute value inside the sum:
\[
|A_\ell(t)| \;\le\; (1-\phi)^t |A_\ell(0)| \;+\; \delta \sum_{\tau=1}^{t}(1-\phi)^{t-\tau}\,|\tilde r_\tau|\,\mathbf{1}\{\ell=j_\tau\}.
\]
This uses the fact that $\big|\sum u_\tau\big|\le\sum |u_\tau|$. Also, $|(1-\phi)^{t-\tau}|=(1-\phi)^{t-\tau}$ since $(1-\phi)\in(0,1)$, and $|\mathbf{1}\{\ell=j_\tau\}|=\mathbf{1}\{\ell=j_\tau\}$ since the indicator is always 0 or 1.

Now, apply the reward bound and indicator bound. Since $|\tilde r_\tau|\le R$ and $\mathbf{1}\{\ell=j_\tau\}\le 1$, each term can be bounded as follows:
\[
|A_\ell(t)| \;\le\; (1-\phi)^t |A_\ell(0)| \;+\; \delta \sum_{\tau=1}^{t}(1-\phi)^{t-\tau}\,R\,\mathbf{1}\{\ell=j_\tau\}
\;\le\; (1-\phi)^t |A_\ell(0)| \;+\; \delta R \sum_{\tau=1}^{t}(1-\phi)^{t-\tau}.
\]

Reindex the geometric sum by setting $k=t-\tau$. As $\tau$ goes from $1$ to $t$, $k$ goes from $t-1$ down to $0$. Thus
\[
\sum_{\tau=1}^{t}(1-\phi)^{t-\tau} \;=\; \sum_{k=0}^{t-1}(1-\phi)^k.
\]
Hence
\[
|A_\ell(t)| \;\le\; (1-\phi)^t |A_\ell(0)| \;+\; \delta R \sum_{k=0}^{t-1}(1-\phi)^k.
\]

Finally, upper-bound the finite geometric series by the infinite one:
\[
\sum_{k=0}^{t-1}(1-\phi)^k \;\le\; \sum_{k=0}^{\infty}(1-\phi)^k \;=\; \frac{1}{\phi}.
\]
This holds because $0<1-\phi<1$ and all terms are nonnegative, so extending the sum can only increase it. Therefore
\[
|A_\ell(t)| \;\le\; (1-\phi)^t |A_\ell(0)| \;+\; \frac{\delta R}{\phi}.
\]

In summary, the general uniform bound, without assuming a particular initialization, is
\[
|A_\ell(t)| \;\le\; (1-\phi)^t |A_\ell(0)| \;+\; \frac{\delta R}{\phi}
\;\le\; |A_\ell(0)| \;+\; \frac{\delta R}{\phi},
\]
since $(1-\phi)^t\le 1$. If we initialize $A_\ell(0)=0$ as in Algorithm~1, the bound simplifies to
\[
|A_\ell(t)| \;\le\; \frac{\delta R}{\phi} \qquad \text{for all } t.
\]
Even if $A_\ell(0)\neq 0$, the extra term $(1-\phi)^t|A_\ell(0)|$ decays to zero, so the bound tightens to $\delta R/\phi$ over time.

\subsection*{Step 3: Steady state (long-run expectation)}

We begin again from the one-step update:
\[
A_\ell(t)=(1-\phi)A_\ell(t-1)+\delta\,\tilde r_t\,\mathbf{1}\{\ell=j_t\}.
\]

Taking expectations on both sides and applying linearity gives
\[
\mathbb{E}[A_\ell(t)]
=(1-\phi)\,\mathbb{E}[A_\ell(t-1)]
+\delta\,\mathbb{E}\big[\tilde r_t\,\mathbf{1}\{\ell=j_t\}\big].
\]

Now compute $\mathbb{E}[\tilde r_t\,\mathbf{1}\{\ell=j_t\}]$ by conditioning on $j_t$:
\[
\mathbb{E}[\tilde r_t\,\mathbf{1}\{\ell=j_t\}]
=\mathbb{E}\big[\; \mathbb{E}[\tilde r_t\,\mathbf{1}\{\ell=j_t\}\mid j_t]\;\big]
=\mathbb{E}\big[\mathbf{1}\{\ell=j_t\}\,\mathbb{E}[\tilde r_t\mid j_t]\big].
\]

By the definitions under stationarity, let $p_\ell=\Pr(j_t=\ell)$ and $\mu_\ell=\mathbb{E}[\tilde r_t\mid j_t=\ell]$. Then
\[
\mathbb{E}[\tilde r_t\,\mathbf{1}\{\ell=j_t\}]
=\sum_{j} \Pr(j_t=j)\,\mathbf{1}\{\ell=j\}\,\mathbb{E}[\tilde r_t\mid j_t=j]
=p_\ell \mu_\ell.
\]

Substituting back, the recursion for expectations becomes
\[
\mathbb{E}[A_\ell(t)]
=(1-\phi)\,\mathbb{E}[A_\ell(t-1)]\;+\;\delta\,p_\ell\,\mu_\ell.
\]

Letting $y_t:=\mathbb{E}[A_\ell(t)]$, $\alpha:=1-\phi\in(0,1)$, and $c:=\delta\,p_\ell\,\mu_\ell$, the recursion is
\[
y_t=\alpha\,y_{t-1}+c.
\]

The general solution to this linear difference equation is
\[
y_t=\alpha^t y_0 + \frac{c}{1-\alpha}\bigl(1-\alpha^t\bigr).
\]

Since $\alpha=1-\phi\in(0,1)$, we have $\alpha^t\to 0$ as $t\to\infty$. Therefore
\[
\lim_{t \to \infty} \, \mathbb{E}[A_\ell(t)] \;=\; \frac{c}{1-\alpha}
=\frac{\delta\,p_\ell\,\mu_\ell}{\phi}.
\]

Equivalently, we can check the fixed point directly: setting $y_t=y_{t-1}=y_\star$ in the recursion yields
\[
y_\star=\alpha y_\star+c,
\]
whose solution is $y_\star=\tfrac{c}{1-\alpha}=\tfrac{\delta p_\ell \mu_\ell}{\phi}$. Since $|\alpha|<1$, the recursion is a contraction and converges to this unique fixed point.

\end{proof}

\subsection{Proof of Lemma~\ref{lem:softmax_bound}}
\label{app:softmax_proof}

\textbf{Lemma~\ref{lem:softmax_bound}}[Controlled attention drift under a bounded bias]

Let $p=\mathrm{softmax}(z)\in\Delta^n$ and $q=\mathrm{softmax}(z+b)$ for any bias $b$ with $\|b\|_\infty\le \varepsilon$.  
Then the total variation distance satisfies
\[
\|q-p\|_{\mathrm{TV}} \;\le\; \tanh(\varepsilon).
\]

\begin{proof}
We begin by recalling definitions. The softmax distribution is given by $p_i=\frac{e^{z_i}}{\sum_j e^{z_j}}$ and $q_i=\frac{e^{z_i+b_i}}{\sum_j e^{z_j+b_j}}$. The total variation (TV) distance is $\|q-p\|_{\mathrm{TV}}=\tfrac12\sum_i |q_i-p_i|$. This measures the discrepancy between two probability distributions, taking values in $[0,1]$.

To analyze this bound, we first express $q$ in terms of $p$. Multiplying and dividing by $\sum_k e^{z_k}$, one obtains
\[
q_i=\frac{e^{z_i+b_i}}{\sum_j e^{z_j+b_j}}
=\frac{e^{b_i}}{\sum_j p_j e^{b_j}}\,p_i.
\]
Letting $Z=\sum_j p_j e^{b_j}$ and $\lambda_i=e^{b_i}/Z$, this simplifies to $q_i=\lambda_i p_i$. Furthermore, $\sum_i p_i\lambda_i=1$, so the $p$-weighted average of the reweighting factors $\lambda_i$ is exactly one.

Next, we bound the reweighting factors. Since $b_i\in[-\varepsilon,\varepsilon]$, it follows that $e^{b_i}\in[e^{-\varepsilon},e^{\varepsilon}]$. The denominator $Z$ is a convex combination of such terms, so $Z\in[e^{-\varepsilon},e^{\varepsilon}]$. Therefore
\[
\lambda_i=\frac{e^{b_i}}{Z}\in\left[\frac{e^{-\varepsilon}}{e^{\varepsilon}},\frac{e^{\varepsilon}}{e^{-\varepsilon}}\right]
=[e^{-2\varepsilon},e^{2\varepsilon}],
\]
so each likelihood ratio $q_i/p_i$ is bounded between $a=e^{-2\varepsilon}$ and $b=e^{2\varepsilon}$ with $ab=1$.

Now we express the TV distance in terms of these reweighting factors:
\[
\|q-p\|_{\mathrm{TV}}=\frac{1}{2}\sum_i |q_i-p_i|
=\frac{1}{2}\sum_i p_i\,|\lambda_i-1|.
\]
Partitioning indices into $S=\{i:\lambda_i\ge1\}$ and $S^c$, and using the fact that $\sum_i p_i\lambda_i=1$ implies $\sum_i p_i(\lambda_i-1)=0$, we deduce that
\[
\|q-p\|_{\mathrm{TV}}=\sum_{i\in S}p_i(\lambda_i-1)=\sum_{i\in S^c}p_i(1-\lambda_i).
\]
Thus TV reduces to the total “mass shifted upwards” (or downwards), and both are equal.

To maximize this deviation under the constraints $\lambda_i\in[a,b]$ and $\sum_i p_i\lambda_i=1$, one only needs to consider extreme points where each $\lambda_i$ equals either $a$ or $b$. The worst case occurs with a two-point mixture: set $\lambda=b$ on some fraction $s$ of the $p$-mass and $\lambda=a$ on the rest. Enforcing $\sum_i p_i\lambda_i=1$ yields $1=sb+(1-s)a$, hence $s=\frac{1-a}{b-a}$. Substituting into the expression for TV gives
\[
\|q-p\|_{\mathrm{TV}}
=\frac{1}{2}\big(s(b-1)+(1-s)(1-a)\big)
=\frac{(1-a)(b-1)}{b-a}.
\]

Finally, substituting $a=e^{-2\varepsilon}$ and $b=e^{2\varepsilon}$, we compute
\[
\|q-p\|_{\mathrm{TV}}
=\frac{b-1}{b+1}
=\frac{e^{2\varepsilon}-1}{e^{2\varepsilon}+1}
=\tanh(\varepsilon).
\]

This establishes the desired bound. Intuitively, the perturbation $b$ shifts logits by at most $\pm\varepsilon$, so each attention probability can change by a factor in $[e^{-2\varepsilon},e^{2\varepsilon}]$. The worst-case change occurs when some entries receive the full amplification while others receive the full suppression, balanced to maintain normalization. This yields the clean $\tanh(\varepsilon)$ formula. For small $\varepsilon$, the bound behaves linearly, $\tanh(\varepsilon)\approx\varepsilon$, so clipping the bias ensures the drift remains small and interpretable.
\end{proof}

\section{Hyperparameters for \textsc{EWA--VQ--ODT}}
\label{app:hyperparams}

Unless noted, all runs use the same backbone and optimizer as the ODT baseline. The values below reflect the configurations used in our main experiments.

\subsection{Core EWA--VQ Settings}
\begin{table}[h]
\centering
\small
\begin{tabular}{lcl}
\toprule
\textbf{Parameter} & \textbf{Value} & \textbf{Description} \\
\midrule
Attraction decay $\phi$ & $0.05$ & Per–step global decay: $A\!\leftarrow\!(1-\phi)A$ \\
Chosen weight $\delta$ & $0.8$ & Additive update for routed code: $A_{j_t}\!+\!=\!\delta\,\tilde r_t$ \\
Attention bias scale $\beta$ & $0.05$ & Scales the attraction bias added to action columns \\
Trajectory length (max) & $1000$ & Max steps processed per trajectory \\
VQ codebook size $M$ & $27$ & Number of discrete action surrogates (codes) \\
Grid bins factor & $1.0$ & Adaptive bins per action dim (see below) \\
Routing & direct lookup & Grid cell $\!\to\!$ code index via cached table $\mathsf{T}$ \\
\bottomrule
\end{tabular}
\end{table}

\paragraph{Codebook/grid construction.}
We use an axis-aligned grid with $B$ bins per dimension; for the tasks studied we set $B{=}3$. The offline table $\mathsf{T}:\{0,\ldots,B^d\!-\!1\}\!\to\!\{1,\ldots,M\}$ maps each grid cell to its nearest code. The table is cached per $(d,B,M,\text{env})$ and reused.

\subsection{Environment–Specific Settings}
\begin{table}[h]
\centering
\small
\begin{tabular}{lcccccc}
\toprule
\textbf{Env} & \textbf{Action dim} & \textbf{$B$} & \textbf{Cells $B^d$} & \textbf{$M$ used} & \textbf{Online RTG} & \textbf{Eval RTG} \\
\midrule
\texttt{hopper-medium-v2} & 3 & 3 & $27$ & $27$ (full coverage) & $7200$ & $3600$ \\
\texttt{walker2d-medium-replay-v2} & 6 & 3 & $729$ & $27$ (subset) & $10000$ & $5000$ \\
\bottomrule
\end{tabular}
\end{table}

\begin{table}[h]
\centering
\small
\begin{tabular}{lccc}
\toprule
\textbf{Env} & \textbf{Eval context length} & \textbf{Ordering (pos.\ emb.)} & \textbf{Notes} \\
\midrule
\texttt{hopper-medium-v2} & $20$ & $1$ & Positional embeddings enabled \\
\texttt{walker2d-medium-replay-v2} & $5$ & $0$ & No positional embeddings \\
\bottomrule
\end{tabular}
\end{table}

\paragraph{Optional configuration (not in main results).}
For 6D tasks like \texttt{halfcheetah-medium-expert-v2}, we use the same $B{=}3$ (729 cells), $M{=}27$ routed codes, and (online/eval) RTGs of $(12000, 6000)$.

\subsection{Training and Model Hyperparameters}
\begin{table}[h]
\centering
\small
\begin{tabular}{lcl}
\toprule
\textbf{Training} & \textbf{Value} & \textbf{Details} \\
\midrule
Online iterations (quick runs) & $10$ & Evaluate every $2$ iterations \\
Updates per iteration & $30$ & Per online iteration \\
Batch size & $64$ & Mini-batch size \\
Eval episodes / checkpoint & $10$ & Parallel eval environments \\
Seeds & $3$ & Reported as averages \\
\bottomrule
\end{tabular}
\end{table}

\begin{table}[h]
\centering
\small
\begin{tabular}{lcl}
\toprule
\textbf{Backbone/Opt} & \textbf{Value} & \textbf{Details} \\
\midrule
Embedding dim & $512$ & Token embedding size \\
Transformer layers & $4$ & Encoder blocks \\
Attention heads & $4$ & Multi-head self-attention \\
Activation & ReLU & \\
Dropout & $0.1$ & Residual/MLP dropout \\
Initial temperature & $0.1$ & As in code (for logits scaling) \\
Optimizer & Adam & \\
Learning rate & $1\times 10^{-4}$ & With warmup \\
Weight decay & $5\times 10^{-4}$ & \\
Warmup steps & $10{,}000$ & Linear warmup schedule \\
Device & CUDA & GPU training \\
\bottomrule
\end{tabular}
\end{table}

\paragraph{Grid adaptation rules.}
For $d\!\le\!6$, we use $B{=}3$ bins/dim. (General logic: $B\!\in\![2,8]$ with a cap on $M$—here $M{=}27$; max $M$ can be raised up to $128$ if memory allows.) The offline $\mathsf{T}$ table is built once (GPU-batched $L_2$) and cached.

\paragraph{Implementation note.}
Attractions $A\!\in\!\mathbb{R}^M$ are stored on GPU; each step applies one vectorized decay and a single indexed add for the routed code. The attention bias adds $\beta\,A$ only to action-token columns prior to masking and softmax.

\end{document}